\newcommand{\nft}{$\textrm{NFT}$\xspace}
\newcommand{\ours}{$\textrm{NFN}_\textrm{NP}$\xspace}
\newcommand{\fullours}{$\textrm{NFN}_\textrm{HNP}$\xspace}
\newcommand{\statnet}{\textsc{StatNN}\xspace}
\newcommand{\mlp}{\textsc{MLP}\xspace}
\newcommand{\iogroup}{\mathcal{S}_{\text{NP}}}
\newcommand{\calU}{\mathcal{U}}
\newcommand{\calW}{\mathcal{W}}
\newcommand{\calB}{\mathcal{B}}
\newcommand{\II}{\mathbb{I}}
\newcommand{\R}{\mathbb{R}}
\newcommand{\paren}[1]{\left( #1 \right)}
\newcommand{\sqbra}[1]{\left[ #1 \right]}
\newcommand{\wt}[2]{W^{(#1)}_{#2}}
\newcommand{\wtmat}[1]{W^{(#1)}}
\newcommand{\bs}[2]{b^{(#1)}_{#2}}
\newcommand{\idx}[3]{#1^{(#2)}_{#3}}
\newcommand{\range}[1]{\llbracket #1 \rrbracket}
\newcommand{\inr}{\text{SIREN}}
\newcommand{\attn}{\textsc{Attn}}
\newcommand{\nfsa}{\textsc{SA}}
\newcommand{\nfca}{\textsc{CA}}
\newcommand{\fullgroup}{S_{\dim(\calW)}}
\newcommand{\block}{\textsc{Block}}
\newcommand{\LN}{\textsc{LN}}
\newcommand{\layerenc}{\textsc{LayerEnc}}
\newcommand{\encodername}{\textsc{Inr2Array}}
\newcommand{\encodernft}{\textsc{Inr2Array}^\textsc{NFT}}
\newcommand{\encodernfn}{\textsc{Inr2Array}^\textsc{NFN}}
\newcommand{\encoder}{\textsc{Enc}}
\newcommand{\decoder}{\textsc{Dec}}
\newcommand*{\GridSize}{4}
\newcommand*{\ColorCells}[1]{%
  \foreach \x/\y/\color in {#1} {
    \node [fill=\color, draw=none, thick, minimum size=1cm, opacity=0.5]
      at (\x-.5,\GridSize+0.5-\y) {};
    }
}
\newcommand{\longdownminus}{{\mbox{\rotatebox[origin=c]{-90}{$\longminus$}}}}
\newcommand{\longminus}{{-\!\!-}}
\newtheorem{definition}{Definition}
\newenvironment{proofsketch}{%
  \proof}{\endproof}
\title{Neural Functional Transformers}
\author{%
  Allan Zhou$^1$ \quad Kaien Yang$^1$ \quad
  Yiding Jiang$^2$ \quad Kaylee Burns$^1$ \\ \quad \textbf{Winnie Xu$^1$} \quad \textbf{Samuel Sokota$^2$} \quad
  \textbf{J. Zico Kolter}$^2$ \quad \textbf{Chelsea Finn}$^1$\\
  $^1$Stanford University \quad $^2$Carnegie Mellon University \\
  \texttt{ayz@cs.stanford.edu} \\
}
\begin{document}

\maketitle

\begin{abstract}
    The recent success of neural networks as implicit representation of data has driven growing interest in \textit{neural functionals}: models that can process other neural networks as input by operating directly over their weight spaces.
    Nevertheless, constructing expressive and efficient neural functional architectures that can handle high-dimensional weight-space objects remains challenging. This paper uses the attention mechanism to define a novel set of permutation equivariant weight-space layers and composes them into deep equivariant models called \textit{neural functional Transformers} (NFTs). NFTs respect weight-space permutation symmetries while incorporating the advantages of attention, which have exhibited remarkable success across multiple domains. In experiments processing the weights of feedforward MLPs and CNNs, we find that NFTs match or exceed the performance of prior weight-space methods. We also leverage NFTs to develop \encodername, a novel method for computing permutation invariant latent representations from the weights of implicit neural representations (INRs). Our proposed method improves INR classification accuracy by up to $+17\%$ over existing methods. We provide an implementation of our layers at \url{https://github.com/AllanYangZhou/nfn}.
\end{abstract}

\section{Introduction}
Deep neural networks have emerged as flexible modeling tools applicable to a wide variety of different fields, ranging from natural language processing to vision to the natural sciences. The sub-field of implicit neural representations (INRs) has achieved significant success in using neural networks to represent data such as 3D surfaces or scenes~\citep{park2019deepsdf,mildenhall2020nerf,sitzmann2020implicit}. This has fueled interest in techniques that directly operate in weight space to modify or extract information from a neural network through its weights or gradients. However, developing models that can process weight-space objects is challenging due to their high dimensional nature. As a consequence, some existing methods for processing datasets of weights assume a restricted training process that reduces the effective weight space~\citep{dupont2021generative,bauer2023spatial,de2023deep}.

In contrast, we follow recent work in building permutation equivariant weight-space models called \textit{neural functionals}, that can process neural network weights\footnote{For clarity, we use ``weights'' (and ``weight space'') to describe the weights (and space they belong to) for the network being processed by a neural functional. We use ``parameters'' for the weights of the neural functional.} without such restrictions~\citep{navon2023equivariant,zhou2023permutation}. In particular, this work concerns neural functionals that are equivariant to permutations of the weights that correspond to re-arranging neurons.
These weight-space permutations, known as \textit{neuron permutation symmetries}, exactly preserve the network's behavior (see examples in Figure~\ref{fig:coupled}). Since equivariance to neuron permutations is typically a useful inductive bias for weight-space tasks, neural functionals achieve superior generalization compared to non-equivariant architectures. Despite this, their performance on weight-space tasks remains significantly worse than convolutional networks' performance on analogous image-space tasks~\citep{navon2023equivariant,zhou2023permutation}, suggesting that neural functional architectures can be significantly improved.

While existing neural functionals rely on \textit{linear} layers (analogous to convolution), some of the most successful architectures in other domains, such as Transformers~\citep{vaswani2017attention,dosovitskiy2020image},  rely on \textit{nonlinear} attention mechanisms.
Motivated by this fact, our work develops novel equivariant weight-space layers based on attention. %
While naive self-attention between input weights is permutation equivariant, it cannot distinguish true neuron permutation symmetries (which preserve network behavior) from ``false'' permutation symmetries\footnote{Moreover, naive self-attention between input weights is computationally intractable.} (which do not preserve network behavior). In contrast, we prove that our weight-space self-attention layer is \textit{minimally equivariant}: it is equivariant to \textit{only} neuron permutations, which are the actual weight-space symmetries.

When composed into deep architectures, our weight-space attention layers give rise to neural functional transformers (NFTs). As an immediate application, we use NFTs to develop \encodername, a method for mapping INR weights into compact latent representations that are trained through a reconstruction-based objective. \encodername{} produces permutation-invariant latents that can be useful for downstream tasks such as INR classification. We also construct NFTs for a variety of other tasks such as weight-space editing or generalization prediction.

NFTs are competitive with or outperform existing methods on each task without using more parameters. Notably, NFTs and \encodername{} improve downstream INR classification accuracy over the best existing methods on multiple datasets, by up to $+17\%$. Overall, our experiments show that attention-based neural functionals can be more expressive and powerful than existing architectures that rely on linear weight-space layers, leading to improved performance across weight-space tasks.

\section{Preliminaries}
Consider a feedforward network having $n_{i}$ neurons at each layer $i=0,\cdots,L$, including the input and output layers. The network has weights\footnote{For simplicity, we omit the biases from our discussion here, see the Appendix for full detail.} $W=\Set{\wtmat{i}\in\R^{n_i \times n_{i-1}} \mid i \in \range{1..L}}$ belonging to weight space, $\calW$. More generally, we are interested in multi-channel weight space features that can arise from stacking multiple weight space objects such as weights and gradients.  For $c$-channel weight-space features $W \in \calW^c$, we have $W = \set{\wtmat{i}\in\R^{n_i \times n_{i-1} \times c} \mid i \in \range{1..L}}$ and each $\wt{i}{jk} := \wt{i}{j,k,:} \in \R^c$ is a vector rather than a scalar.

Since the neurons in each hidden layer $i \in \{1,\cdots, L-1\}$ have no inherent ordering, the feedforward network is invariant to the symmetric group $S_{n_i}$ of permutations of the neurons in layer $i$. We follow \citet{zhou2023permutation} in studying the \textbf{neuron permutation} (NP) group $\iogroup\coloneqq S_0 \times \cdots \times S_{n_L}$, which further assumes permutation symmetry of the input and output layers. Although this simplifying assumption is usually too strong, it can be effectively corrected in practice using position encodings at the input and output layers~\citep{zhou2023permutation}.

Consider a neuron permutation $\sigma = (\sigma_0 , \cdots, \sigma_L) \in \iogroup$. Each $\sigma_i$ permutes the neurons of layer $i$, which correspond to the output space (rows) of $\wtmat{i-1}$ and the input space (columns) of $\wtmat{i}$. Hence the action on weight space features $W \in \calW^c$ is denoted $\sigma W$, where
\begin{equation}
    \label{eq:action}
    \idx{\sqbra{\sigma W}}{i}{jk} = \wt{i}{\sigma_{i}^{-1}(j), \sigma_{i-1}^{-1}(k)}, \quad
    \sigma=(\sigma_0,\cdots,\sigma_L) \in \iogroup.
\end{equation}
As illustrated in Figure~\ref{fig:coupled}, a neuron permutation must always permute the columns of $\wtmat{i}$ and the rows of $\wtmat{i-1}$ simultaneously in order to preserve network behavior.

This work is focused on principled architecture design for neural functionals, i.e.,  architectures that process weight space features~\citep{zhou2023permutation}. Recent work has shown the benefit of incorporating neuron permutation symmetries into neural functional architectures by enforcing equivariance (or invariance) to neuron permutation symmetries~\citep{navon2023equivariant,zhou2023permutation}. Consider weight-space function classes parameterized by $\theta \in \Theta$. We say that a function class $f:\calW^c \times \Theta \rightarrow \calW^c$ is $\iogroup$\textbf{-equivariant} if permuting the input by any $\sigma \in \iogroup$ always has the same effect as permuting the output by $\sigma$:
\begin{equation}
    \label{eq:equivariance}
    \sigma f(W; \theta) = f(\sigma W; \theta), \quad \forall \sigma \in \iogroup, W\in \calW^c, \theta \in \Theta.
\end{equation}
Similarly, a function class $f:\calW^c \times \Theta \rightarrow \mathbb{R}$ is $\iogroup$\textbf{-invariant} if $f(\sigma W; \theta) = f(W; \theta)$ for all $\sigma, W,$ and $\theta$.

Our weight-space layers build off of the dot-product attention mechanism, which takes a query $q$ and a set of $N$ key-value pairs $\set{(k_p, v_p)}_{p=1}^N$ and computes, in the single-headed case:
\begin{equation}
    \label{eq:attn}
   \attn \paren{q, \set{(k_p, v_p)}_{p=1}^N} = \sum_p v_p \paren{\frac{\exp (q \cdot k_p)}{\sum_{p'} \exp(q \cdot k_{p'})}}.
\end{equation}
If all $q,k_p,v_p$ are vectors in $\R^d$, then the output is also a vector in $\R^d$. The definition extends to the situation where $q,k,v_p$ are multi-dimensional arrays of equal shape, by taking the dot product between these arrays in the natural way. We can also use multi-headed attention (\citep{vaswani2017attention}) without significant modification.

\section{Neural Functional Transformers}
\begin{figure}
\centering
\scalebox{0.35}{%
\begin{tikzpicture}[thick]
    \node[scale=2] at (3.5, 6) {$\wtmat{i+1}$};
    \node[scale=2] at (12, 7) {$\wtmat{i}$};
    \ColorCells{1/1/orange, 2/1/orange, 3/1/orange, 4/1/orange, 5/1/orange, 6/1/orange};
    \ColorCells{1/3/orange, 2/3/orange, 3/3/orange, 4/3/orange, 5/3/orange, 6/3/orange};
    \ColorCells{1/0/blue, 1/1/blue, 1/2/blue, 1/3/blue, 1/4/blue};
    \ColorCells{5/0/blue, 5/1/blue, 5/2/blue, 5/3/blue, 5/4/blue};
    \draw[step=1] (0, 0) grid (6, 5);
    \ColorCells{13/-1/red, 13/0/red, 13/1/red, 13/2/red, 13/3/red, 13/4/red};
    \ColorCells{11/-1/red, 11/0/red, 11/1/red, 11/2/red, 11/3/red, 11/4/red};
    \ColorCells{11/-1/blue, 12/-1/blue, 13/-1/blue};
    \ColorCells{11/3/blue, 12/3/blue, 13/3/blue};
    \draw [step=1] (10, 0) grid (3+10, 6);
    \draw[{Latex[bend]}-{Latex[bend]}, bend right=40] (9.7, 5.5) to node[midway,left,inner sep=2pt, scale=2] {$\sigma_{i}$} (9.7, 1.5);
    \draw[{Latex[bend]}-{Latex[bend]}, bend right=40] (-0.25, 3.5) to node[midway,left,inner sep=2pt, scale=2] {$\sigma_{i+1}$} (-0.25, 1.5);
    \draw [{Latex[bend]}-{Latex[bend]}, bend right=40] (0.5,-0.25) to node[midway,below,inner sep=2pt, scale=2] {$\sigma_{i}$} (4.5,-0.25);
    \draw [{Latex[bend]}-{Latex[bend]}, bend right=40] (10.5,-0.25) to node[midway,below,inner sep=2pt, scale=2] {$\sigma_{i-1}$} (12.5,-0.25);
    \node[scale=4, overlay] at (-3.5, 2.5) {$\cdots$};
    \node[scale=4, overlay] at (15, 2.5) {$\cdots$};
\end{tikzpicture}
}%
\qquad \qquad \vrule \qquad \qquad
\scalebox{0.35}{%
\begin{tikzpicture}[thick]
    \node[scale=2] at (3.5, 6) {$\wtmat{i+1}$};
    \node[scale=2] at (12, 7) {$\wtmat{i}$};
    \draw[step=1] (0, 0) grid (6, 5);
    \draw [step=1] (10, 0) grid (3+10, 6);
    \ColorCells{2/0/blue, 2/1/blue, 2/2/blue, 2/3/blue, 2/4/blue};
    \ColorCells{6/0/blue, 6/1/blue, 6/2/blue, 6/3/blue, 6/4/blue};
    \ColorCells{11/-1/blue, 12/-1/blue, 13/-1/blue};
    \ColorCells{11/3/blue, 12/3/blue, 13/3/blue};
    \ColorCells{11/4/orange, 4/2/orange};
    \node[scale=4, overlay] at (-1.5, 2.5) {$\cdots$};
    \node[scale=4, overlay] at (15, 2.5) {$\cdots$};
    \draw [{Latex[bend]}-{Latex[bend]}, bend right=40] (1.5,-0.25) to node[midway,below,inner sep=2pt, scale=2] {$\tau_{i}$} (5.5,-0.25);
    \draw[{Latex[bend]}-{Latex[bend]}, bend right=40] (9.7, 5.5) to node[midway,left,inner sep=2pt, scale=2] {$\tau_{i}$} (9.7, 1.5);
    \draw[{Latex[bend]}-{Latex[bend]},] (10, 0.5) to node[midway,below,inner sep=2pt, scale=2] {$\tilde{\tau}$} (4, 2.5);
\end{tikzpicture}
}%
\caption{\textbf{Left:} An illustration of how neuron permutations $\sigma=(\sigma_0,\cdots,\sigma_L)$ act on weight matrices $W=(\wtmat{0},\cdots,\wtmat{L})$. Each $\sigma_i$ simultaneously permutes the rows and columns of adjacent weight matrices $\wtmat{i},\wtmat{i+1}$. \textbf{Right:} Two examples of false symmetries, i.e., permutations that don't preserve network behavior: (1) when $\tau_i$ permutes the rows and columns of adjacent matrices permute differently and (2) when $\tilde{\tau}$ permutes weights across layers.}
\label{fig:coupled}
\end{figure}
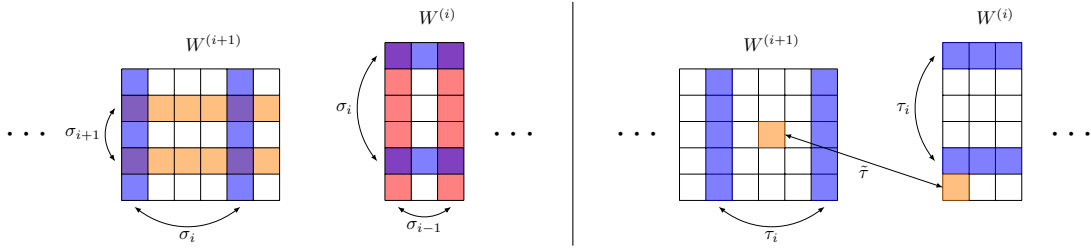

We now introduce two attention-based layers for processing weight-space features: weight-space self-attention (which is $\iogroup$-\textit{equivariant}) and weight-space cross-attention (which is $\iogroup$-\textit{invariant}). We then describe how to compose these layers into deep architectures called neural functional transformers (NFTs).

\subsection{Equivariant weight-space self-attention}
A key concept motivating the design of our weight-space self-attention layer is the distinction between actual NP symmetries $\iogroup$ that preserve network behavior, and other permutations of the weights that can generally change network behavior. For example, consider a $\tau \in \fullgroup$ that permutes the columns of $\wtmat{i}$ differently from the rows of $\wtmat{i-1}$ or that moves weights between different weight matrices (see Figure~\ref{fig:coupled}). Such permutations of the weight space are called ``false symmetries''~\citep{navon2023equivariant} since they generally modify network behavior. We are interested in equivariance to the actual weight-space symmetries (neuron permutations) but \textit{not} to any false symmetries, a property we call \textit{minimal} equivariance:

\begin{definition}
    \label{def:min-equiv}
    A function class $f:\calW^c \times \Theta \rightarrow \calW^c$ is minimally $\iogroup$-equivariant if it is $\iogroup$-equivariant (Eq.~\ref{eq:equivariance}), but not equivariant to any false symmetries. More precisely, for any $\tau \in \fullgroup$ such that $\tau \notin \iogroup$, there exist a $W \in \calW^c$ and $\theta \in \Theta$ such that $f(\tau W; \theta) \neq \tau f(W;\theta)$.
\end{definition}
Consider naive self-attention between the $\dim(\calW)$ weights $\set{\wt{i}{jk}}$, which would be equivariant to \textit{any} weight-space permutation. This would be $\iogroup$-equivariant, but would also be equivariant to any false permutation symmetries, meaning that it is not minimally equivariant. In other words, naive self-attention is overly constrained by too many symmetries, including those that are not actually relevant to weight-space tasks.

Our weight-space self-attention layer $\nfsa(\cdot; \theta_Q, \theta_K, \theta_V): \calW^c \rightarrow \calW^c$ is more sensitive to the distinction between neuron permutations and false symmetries. It operates on weight-space features with $c$ channels and is parameterized by query/key/value projection matrices $\theta_Q,\theta_K,\theta_V\in\R^{c \times c}$. Each entry of the output feature is computed:
\begin{align}
    \label{eq:np-self-attn}
    \nfsa\paren{W; \theta_Q, \theta_K, \theta_V}^{(i)}_{jk} = &\quad\,\attn\paren{\idx{Q}{i}{j,:}, \Set{\idx{(K,V)}{i-1}{:,q}}_{q=1}^{n_{i-2}} \bigcup \Set{\idx{(K,V)}{i}{p,:}}_{p=1}^{n_i}}_k \\
    \label{eq:sa-term2}
    &+ \attn\paren{\idx{Q}{i}{:,k}, \Set{\idx{(K,V)}{i}{:,q}}_{q=1}^{n_{i-1}} \bigcup \Set{\idx{(K,V)}{i+1}{p,:}}_{p=1}^{n_{i+1}}}_j \\
    \label{eq:sa-term3}
    &+ \attn\paren{\idx{Q}{i}{jk}, \Set{\idx{(K,V)}{s}{pq} | \forall s,p,q}}, \text{ where} \\
    \idx{Q}{i}{j,k} := \theta_Q &\wt{i}{jk}, \quad 
    \idx{K}{i}{j,k} := \theta_K \wt{i}{jk}, \quad
    \idx{V}{i}{j,k} := \theta_V \wt{i}{jk},
\end{align}
and the notation $\idx{(K,V)}{i}{jk}$ is shorthand for the tuple $\paren{\idx{K}{i}{jk}, \idx{V}{i}{jk}}$. Appendix~\ref{sec:full-desc} defines the more general version of this layer, which handles inputs with both weights and biases.

The final term (Eq.~\ref{eq:sa-term3}) is simply naive self-attention between the inputs $\set{\wt{i}{jk}\in\R^c}$, but the first two terms give the layer additional structure relevant to neuron permutations in particular. They amount to self-attention between the rows and columns of adjacent weights $\wtmat{i}$ and $\wtmat{i-1}$
, shown here for a single channel ($c=1$):
\begin{equation}
  \label{eq:first-two-terms}
    \left(\begin{array}{c}
     \longminus W^{(i)}_{1, :} \longminus\\
     \vdots\\
     \longminus W_{n_i, :}^{(i)} \longminus
   \end{array}\right)  \left(\begin{array}{ccc}
     \longdownminus &  & \longdownminus\\
     W^{(i - 1)}_{:, 1} & \cdots & W^{(i - 1)}_{:, n_{i - 2}}\\
     \longdownminus &  & \longdownminus
   \end{array}\right).
\end{equation}
Self-attention between these $n_{i} + n_{i-2}$ vectors is \textit{only} equivariant if the rows of $\wtmat{i-1}$ and columns of $\wtmat{i}$ permute simultaneously, which neuron permutations do.

Since neuron permutations never permute weights \textit{between} different layers $\wtmat{i}$ and $\wtmat{i'}$, we may additionally use learned layer position encodings $\set{\phi^{(i)} \in \R^c | i\in \range{1..L}}$ to prevent equivariance to such permutations:
\begin{equation}
\label{eq:layerenc}
    \layerenc\paren{W; \set{\phi^{(i)}}_{i=1}^L}^{(i)}_{jk} = \wt{i}{jk} + \phi^{(i)}.
\end{equation}
We can now compose self-attention with position encoding to obtain $\nfsa \circ \layerenc$.

\begin{restatable}[Minimal equivariance]{thm}{minimal}
The combined layer $\nfsa \circ \layerenc$ is \textbf{minimally} $\iogroup$-equivariant.
\end{restatable}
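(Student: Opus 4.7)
The plan is to prove the two halves of minimal equivariance separately: (i) $\iogroup$-equivariance of $\nfsa \circ \layerenc$ for every choice of parameters, and (ii) for each $\tau \in \fullgroup \setminus \iogroup$, existence of weights and parameters witnessing non-equivariance. For (i), I would handle the four constituents---$\layerenc$ and the three attention terms in Eqs.~\ref{eq:np-self-attn}--\ref{eq:sa-term3}---separately. The layer encoding adds the constant $\phi^{(i)}$ uniformly across layer $i$, and since any $\sigma \in \iogroup$ preserves layer indices it commutes trivially with the action. The third term (Eq.~\ref{eq:sa-term3}) is naive self-attention over the full set of entries $\wt{i}{jk}$, hence equivariant to \emph{any} permutation of entries and in particular to $\iogroup$. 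The real work is the first two terms; using $[\sigma W]^{(i)}_{j,k} = \wt{i}{\sigma_i^{-1}(j),\sigma_{i-1}^{-1}(k)}$, one checks that under $\sigma$ the query row $Q^{(i)}_{j,:}$ and all participating keys/values (rows of $K^{(i)}$ and columns of $K^{(i-1)}$) are vectors in $\R^{n_{i-1}}$ whose entries are permuted by the \emph{same} $\sigma_{i-1}$. Since dot products are invariant under simultaneous permutation of both arguments, the attention weights are unchanged, while the query index $j$ is relabelled by $\sigma_i$ and each value vector's entries are permuted by $\sigma_{i-1}$, yielding exactly $\sigma$'s action on the output. The second term is symmetric.

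For (ii), I would classify any $\tau \in \fullgroup \setminus \iogroup$ by which defining property of a neuron permutation it violates: (a) $\tau$ moves some weight into a different layer; (b) $\tau$ preserves layers but its restriction to some $\wtmat{i}$ is not a product of row and column permutations; or (c) $\tau$ acts by row/column permutations within each matrix but the permutations at the shared neuron index between adjacent layers disagree. For (a), choosing the $\phi^{(i)}$ distinct across layers makes $\layerenc$ produce different constant shifts at the source and destination positions, and any injective $\theta_V$ carries this difference through the attention output. For (b) and (c), I would set $\theta_Q = \theta_K = \theta_V = I$ so attention responds directly to weight values, and pick a generic $W$ with all rows and columns distinct; this activates the row/column attention block of Eq.~\ref{eq:first-two-terms}, which detects any failure of $\tau$ to permute rows of $\wtmat{i}$ and columns of $\wtmat{i+1}$ by the same $\sigma_i$.

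The main obstacle is cases (b) and (c): one must argue that for \emph{any} such $\tau$ the row/column block genuinely fails to commute with $\tau$. I would handle this by contrapositive---supposing the block commutes with $\tau$ on a generic $W$, extract from the resulting attention-weight identities that $\tau$ must restrict on each $\wtmat{i}$ to a row-column product permutation, and that the column permutation of $\wtmat{i}$ must agree with the row permutation of $\wtmat{i-1}$, forcing $\tau \in \iogroup$. The careful bookkeeping across overlapping layer boundaries is where the delicate work lies.
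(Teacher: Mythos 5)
Your proposal follows essentially the same route as the paper's proof: term-by-term verification of $\iogroup$-equivariance, followed by the identical three-way classification of false symmetries (cross-layer, non-product within a layer, decoupled adjacent permutations), with layer encodings ruling out the first case and $\theta_Q=\theta_K=\theta_V=I$ plus a chosen input activating the row/column attention block for the other two. The only minor difference is that where you invoke a generic $W$ and a contrapositive argument for cases (b) and (c), the paper exhibits explicit sparse witnesses (all weights zero except two entries such as $\wt{i}{jk}=\wt{i}{jq}=1$), which makes the non-equivariance check more direct; otherwise the arguments coincide.
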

\begin{proofsketch}
To show equivariance: $\layerenc$ is clearly equivariant to neuron permutations. We can check $\nfsa(\sigma W)^{(i)}_{jk} = \nfsa(W)^{(i)}_{\sigma_i^{-1}(j),\sigma_{i-1}^{-1}(k)}$ by expanding the left hand side term-by-term using the definition (Eq.~\ref{eq:np-self-attn}). To show non-equivariance to false symmetries: we can broadly classify false symmetries into three types and check non-equivariance to each of them. Appendix~\ref{sec:minimal-proof} provides the full proof.
\end{proofsketch}

\textbf{Practical considerations.} The final term of $\nfsa$ (Eq.~\ref{eq:sa-term3}) amounts to self-attention between each weight-space feature $\wt{i}{jk} \in \R^c$ and requires $O\paren{(\dim{\calW})^2 c}$ operations, which is usually intractable. In practice, we replace this term by a self-attention between the row-column sums $\wt{i}{\star,\star} = \sum_{j,k} \wt{i}{jk}$ of each weight matrix. This preserves interaction between weight-space features in different weight matrices, while reducing the computational complexity to $O\paren{L^2 c}$. One can verify that this simplified version also maintains minimal $\iogroup$-equivariance. Meanwhile, consider the self-attention operation required to compute the first two terms of $\nfsa$ (depicted in Eq.~\ref{eq:first-two-terms}). If $n_i = n$ for all $i$, then the first two terms require $O(Ln^3c)$ operations, which is feasible for moderate $n$ but can become expensive when processing very wide networks. As a result, weight-space attention layers are usually more computationally expensive than linear weight-space layers.

\subsection{Invariant weight-space cross-attention}
\begin{figure}
    \centering
    \includegraphics[width=0.9\textwidth]{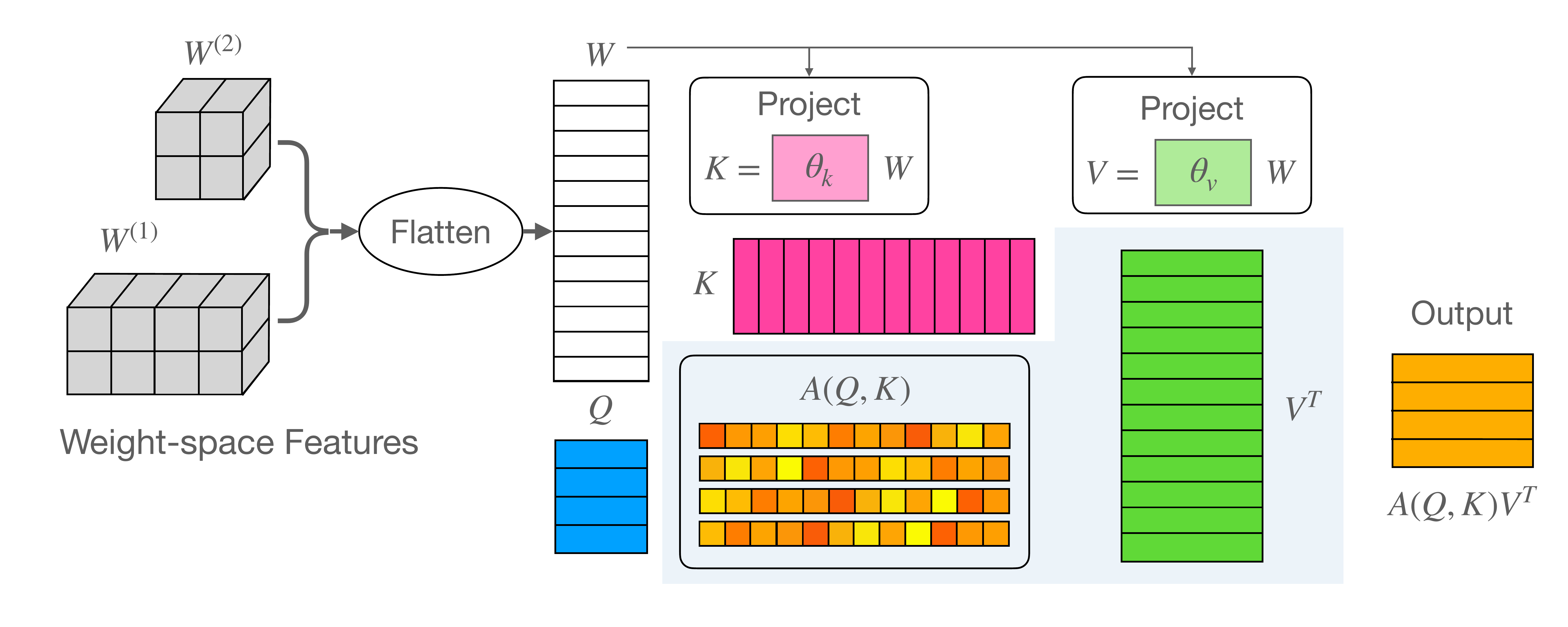}
    \caption{An illustration of our weight-space cross-attention layer, which pools weight-space features into a set of $\iogroup$-invariant vectors. The layer uses a set of learned queries $Q \in \R^{M \times c}$ to attend over keys and values produced from the weight space features.}
    \label{fig:crossattn}
\end{figure}

Stacking weight-space self-attention layers produces equivariant weight-space features, but some situations require producing an $\iogroup$-\textit{invariant} output. Existing approaches achieve invariance by relying on a summation or maximization over the rows and columns of each input $\wtmat{i}$~\citep{navon2023equivariant,zhou2023permutation}.

It is natural to extend these existing invariant layers by using attention over the input's entries, which has similar permutation invariant properties as summation and maximization. Our weight-space cross-attention layer $\nfca:\calW^c \rightarrow \R^d$ is simply a cross attention between a learned query $e \in \R^d$ and the set keys and values produced from weight-space features $\set{\wt{i}{jk} \in \R^c}$:
\begin{align}
    \label{eq:cross-attn}
    \nfca\paren{W; e, \theta_K, \theta_V} = \attn\paren{e, \Set{\paren{\theta_K \wt{s}{pq}, \theta_V \wt{s}{pq}} | \forall s,p,q}},
\end{align}
with $\theta_K,\theta_V \in \R^{d \times c}$ being learned projection matrices. By repeating this operation with $M$ different learned embeddings $e_1,\cdots,e_M$, we can easily extend this into an invariant map $\calW^c \rightarrow \R^{M \times d}$. We depict the operation of the multi-embedding case in Figure~\ref{fig:crossattn}.

\subsection{Convolutional weight spaces}
Although our description thus far focuses on fully-connected weight-space features $W$, we can also extend our layers to convolutional weight spaces. Suppose $W$ is the $c$-channel weight-space feature corresponding to a 1D convolutional network with filter widths $k_i$ at each layer $i$. It contains matrices $\wtmat{i}\in\R^{n_i \times n_{i-1} \times k_i \times c}$. As in \citet{zhou2023permutation}, the filter dimension $k_i$ can be folded into the channel dimension, creating a feature in $\R^{n_i \times n_{i-1} \times (k_i c)}$ with $k_i c$ channels. The problem is that $k_i$ may not be consistent across all $i$, while dot-product attention operations require that the channel dimensions for each $\wtmat{i}$ match. We solve this problem by choosing a shared channel size $\tilde{c}$ and using learned linear projections $\textsc{Proj}_i: \R^{n_i \times n_{i-1} \times (k_i c)} \rightarrow \R^{n_i \times n_{i-1} \times \tilde c}$ to guarantee matching channel dimensions. The output of $\nfsa(\cdot)$ can then be restored to the original channel dimension by another set of learned linear projections $\textsc{UnProj}_i: \R^{n_i \times n_{i-1} \times \tilde c} \rightarrow \R^{n_i \times n_{i-1} \times k_i \times c}$.

\subsection{Building Neural Functional Transformers}
\label{subsec:block}
Following Transformer architecture design~\citep{vaswani2017attention}, we form stackable ``blocks'' that combine weight-space self attention with LayerNorm, pointwise MLPs, and residual connections. Each block maps $\calW^c \rightarrow \calW^c$ and preserves $\iogroup$-equivariance:
\begin{align}
    \block(W) &= Z + \mlp(\LN(Z))\\
    Z &= W + \nfsa(\LN(W)),
\end{align}
where both the $\mlp:\R^c \rightarrow \R^c$ and LayerNorm $\LN:\R^c\rightarrow\R^c$ operate ``pointwise,'' or independently on each input $\wt{i}{jk}$. Since pointwise operations are permutation equivariant, the overall block is an $\iogroup$-equivariant map on $\calW^c$.
To build neural functional Transformers (NFTs), we stack multiple blocks into a deep equivariant architecture.
For tasks which require $\iogroup$-invariance, like classification or learning an invariant latent space of weights (Section~\ref{sec:autoencoder}), we can apply weight-space cross-attention on top of the features produced by the final $\block(\cdot)$. The invariant activations produced by $\nfca(\cdot)$ can then be fed into an MLP that produces the final output.

Additionally, the NFT's input is often one or a few channels, while our desired hidden channel dimension may be significantly larger (e.g., $c=256$). We can apply any function $g: \R^{c_1} \rightarrow \R^{c_2}$ to each $\wt{i}{jk} \in \R^{c_1}$ to project from the input channel dimension to the hidden channel dimension while preserving equivariance. In our experiments, we use random Fourier features~\citep{rahimi2007random} to achieve this projection.

\section{\encodername{}: Learning an invariant latent space for weights}
\label{sec:autoencoder}

An application of interest for weight-space architectures is to learn a compact latent representation of weights, which can be useful for downstream tasks such as classifying the signal represented by an implicit neural representation (INR). The recently introduced \textsc{Inr2vec}~\citep{de2023deep} can learn a representation of INR weights by using a reconstruction objective, but requires every INR share the same initialization. In this section, we leverage neural functionals to extend the \textsc{inr2vec} framework to the independent initialization setting, which is significantly more challenging but removes any restrictions on the INR training process.

\textsc{Inr2vec} uses an encoder-decoder setup to map INR weights into useful latent representations. We make two key changes to the original \textsc{Inr2Vec} formulation:
\begin{enumerate}[nosep]
    \item We implement the encoder with an $\iogroup$-invariant neural functional transformer (NFT). This guarantees that the latent representation is invariant to neuron permutations of the weights.
    \item We allow the latent representation to be a spatially meaningful \textit{array} of vectors. In particular, each vector in the latent array is responsible for encoding a single spatial patch of the INR.
\end{enumerate}
The first modification is a useful inductive bias because the signal represented by an INR should be invariant to neuron permutations. The second modification is inspired by Spatial Functa~\cite{bauer2023spatial}, which found that giving INR latent spaces spatial structure was helpful in their meta-learning setting. We call our representation learning approach $\encodername{}$ due to its modified latent space structure.

The following explanation focuses on 2D INRs that represent images on the coordinate grid $[-1, 1]^2$, though the general case follows naturally. We first split the coordinate grid into $M$ spatial patches $P_1 \cup \cdots \cup P_M = [-1, 1]^2$. Given a $\inr(\cdot; W)$, \encodername{} uses an $\iogroup$-invariant NFT encoder $\encoder_\theta(\cdot)$ to map weights $W$ to a latent array of $M$ vectors, $z \in \R^{M \times d}$. The decoder $\decoder_\theta: \R^d \rightarrow \calW$ produces a set of weights $\Set{\hat{W}_i | i = 1, \cdots, M}$, one for each vector $z_i := z_{i,:} \in \R^d$. Each $\hat{W}_i(z_i)$ is responsible for parameterizing the SIREN only for the spatial patch $P_i \subset [-1, 1]^2$. The objective is to reconstruct the original INR's content using the decoded weights:
\begin{align}
    \label{eq:reconstruction}
    \mathcal{L}(\theta, W) = \sum_{i=1}^M \sum_{x \in P_i} \paren{\inr\paren{x; \hat W_i} - \inr\paren{x; W}}^2, \\
    \quad \hat W_i = \decoder_\theta(z_i), \quad z = \encoder_\theta(W).
\end{align}

Our \encodername{} encoder architecture uses multiple weight-space self-attention layers followed by a weight-space cross-attention layer $\nfca$, which is especially well-suited for producing $M \times d$ permutation invariant arrays. For the decoder, we use the hypernetwork design of \citet{sitzmann2020implicit}, where the hypernetworks are conditioned on the spatial latent $z_i \in \R^d$. In principle, one can also design an invariant encoder using (non-attentive) NFN layers, and we will compare both options in our experiments. 

\section{Experiments}
Our experiments broadly involve two types of weight-space datasets: (1) datasets of neural networks (e.g., CNN image classifiers) trained with varying hyperparameters, where the goal is to model interesting properties of the trained networks (e.g., generalization) from their weights; and (2) datasets containing the weights of INRs each representing a single data signal, such as an image. Tasks including editing INR weights to modify the represented signal, or predicting target information related to the signal (e.g., image class). For our INR experiments, we construct datasets for MNIST~\citep{lecun1995convolutional}, FashionMNIST~\citep{xiao2017/online}, and CIFAR-10~\citep{krizhevsky2009learning} following the procedure of \citet{zhou2023permutation} exactly.

\subsection{INR classification with \encodername{}}

\begin{figure}
    \centering
    \includegraphics[width=0.8\textwidth]{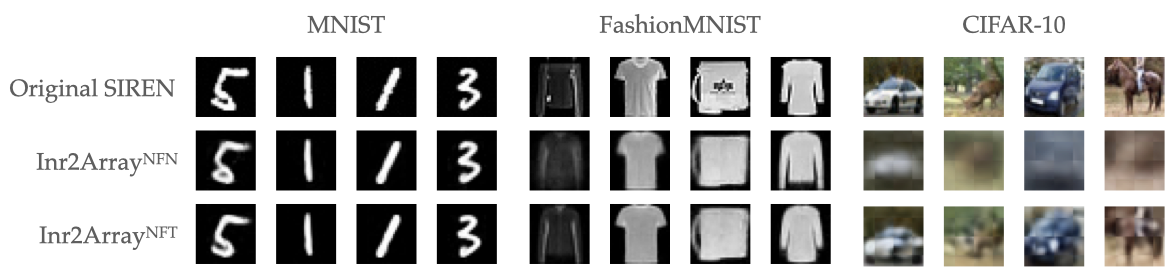}
    \caption{\encodername{} reconstructions for random samples from each INR dataset. The NFT encoder produces higher qualitatively better reconstructed \inr{}s on each dataset, especially CIFAR-10. Blocking artifacts are due to the spatial latent structure, and suggest further room for improvement.}
    \label{fig:decoded-samples}
\end{figure}

Classifying the signal represented by an INR directly from its weights is a challenging problem, with current approaches requiring that all INRs share initialization~\citep{dupont2022data,bauer2023spatial,de2023deep}. In the ``vanilla'' setting where INRs can be initialized and trained independently, state-of-the-art methods struggle to classify signals from even simple datasets~\citep{navon2023equivariant,zhou2023permutation}. We train \encodername{} to learn a latent representation of INR weights that can be used for more effective classification in this vanilla setting.

We implement both NFT (weight-space self-attention and cross-attention) and \ours{}~\citep{zhou2023permutation} variants of the \encodername{} encoder, and distinguish the two variants by superscripts: $\encodernft$ and $\encodernfn$. For the decoder, we use the hypernetwork design of \citet{sitzmann2020implicit}, where the hypernetworks are conditioned on the spatial latent $z_i \in \R^d$. Appendix~\ref{appendix:encoding} describes the implementation and training of each variant in full detail.

We train separate encoders for each INR dataset shown in Table~\ref{tab:inr2array-mse} of the appendix, which reports the reconstruction mean square error (Eq~\ref{eq:reconstruction}) on test INR inputs.
The NFT encoder consistently achieves lower reconstruction error than the NFN encoder, and $\encodernft$ also produces qualitatively better reconstructions than $\encodernfn$ (Figure~\ref{fig:decoded-samples}), confirming that NFTs enable higher quality encoding than previous neural functionals.

Once trained, the \encodername{} encoder maps INR weights to compact latent arrays that represent the INR's contents. Downstream tasks such as classifying the INR's signal (e.g., image) can now be performed directly on these $\iogroup$-invariant latent arrays $z \in \R^{M \times d}$. Concretely, in $K$-way classification the trained encoder $\encoder: \calW \rightarrow \R^{M \times d}$ can be composed with any classification head $f:\R^{M \times d} \rightarrow \R^K$ to form an $\iogroup$-invariant classifier $f \circ \encoder: \calW \rightarrow \R^K$. Since the encoder is already trained by the reconstruction objective, we only train $f$ during classification and keep the encoder fixed.
We choose to implement $f$ using a Transformer classification head~\citep{vaswani2017attention}, which views the latent array input as a length-$M$ sequence of $d$-dimensional vectors.

Existing INR classifiers, which we refer to as DWS~\citep{navon2023equivariant} and NFN~\citep{zhou2023permutation}, are permutation-invariant architectures that directly map input weights to predicted labels. We also create a modified NFN classifier (Spatial NFN) which produces an $M \times d$ array of activations before a convolutional classifier head, to test the impact of spatial latents independent of the \encodername{} training process. Appendix~\ref{appendix:classification} describes the setup for each method in further detail.

Table~\ref{tab:inr-classification} shows that \encodername{} significantly improves classification test accuracies across the board. In addition, the best performance is achieved by implementing the encoder using our attention-based layers (NFT) compared to linear weight-space layers (NFN). Notably, $\encodernft$ achieves a test accuracy of $64.4\%$ on CIFAR-10 (the hardest dataset), an improvement of $+17\%$ over the previous best result of $46.6\%$ by NFNs. They also achieve an MNIST test accuracy of $98.5\%$, up from $92.9\%$ by NFNs and $85.7\%$ by DWS. Meanwhile, Spatial NFN fails to improve performance over standard NFN classifiers, implying that \encodername{} is crucial to make the spatial latent space useful. The results show that learning a latent array representation of INR weights using \encodername{} and NFTs can greatly improve performance on downstream INR tasks like classification.

\begin{table}
    \centering
    \caption{Test accuracy (\%) for weight-space INR classification in the MNIST, FashionMNIST, and CIFAR-10 datasets. \encodername{} significantly improves over current state-of-the-art results in this setting. For the NFN baseline~\citep{zhou2023permutation} we report the higher performance out of their NP and HNP variants. Uncertainties indicate standard error over three runs.}
    \begin{tabular}{lccc}
      \toprule
      & \multicolumn{1}{c}{MNIST} & \multicolumn{1}{c}{FashionMNIST} & \multicolumn{1}{c}{CIFAR-10} \\
      \midrule
      DWS~\citep{navon2023equivariant} & $85.7 \pm 0.57$ & $65.5 \pm 0.48$ & -- \\
      NFN~\citep{zhou2023permutation} & $92.9 \pm 0.38$ & $75.6 \pm 1.07$ & $46.6 \pm 0.13$ \\
      Spatial NFN & $92.9\pm 0.46$ & $70.8\pm0.53$ & $45.6 \pm 0.11$ \\
      \midrule
      $\encodernfn$ (Ours) & $94.6 \pm 0.00$ & $76.7 \pm 0.00$ & $45.4 \pm 0.00$ \\
      $\encodernft$ (Ours) & $\mathbf{98.5 \pm 0.00}$ & $\mathbf{79.3 \pm 0.00}$ & $\mathbf{63.4 \pm 0.00}$ \\
      \bottomrule
    \end{tabular}
    \label{tab:inr-classification}
\end{table}

\subsection{Editing INRs}
\begin{table}
    \centering
    \caption{Test mean squared error to the target visual transformation for five INR editing tasks. NFTs generally achieve lower test error than NFN variants. Uncertainties indicate standard error over three seeds.}
    \begin{tabular}{crrr}
      \toprule
      & \fullours~\citep{zhou2023permutation} & \ours~\citep{zhou2023permutation} & \nft (Ours)\\
      \midrule
      MNIST (erode) & $0.0228 \pm 0.0003$ & $0.0223 \pm 0.0000$ &$\mathbf{0.0194 \pm 0.0002}$ \\
        MNIST (dilate) & $0.0706 \pm 0.0005$ & $0.0693 \pm 0.0009$ & $\mathbf{0.0510 \pm 0.0004}$ \\
        MNIST (gradient) & $0.0607 \pm 0.0013$ & $0.0566\pm0.0000$ & $\mathbf{0.0484\pm0.0007}$ \\
        \midrule
        FashionMNIST (gradient) & $0.0878 \pm 0.0002$ & $0.0870 \pm 0.0001$ & $\mathbf{0.0800 \pm 0.0002}$\\ 
        \midrule
        CIFAR (contrast) & $0.0204 \pm 0.0000$ & $0.0203 \pm 0.0000$ & $0.0200 \pm 0.0002$\\
      \bottomrule
    \end{tabular}
    \label{tab:inr-stylization}
\end{table}
We also evaluate NFTs on editing INR weights to alter their signal, e.g., to modify the represented image. The goal of this task is to edit the weights of a trained SIREN to alter its encoded image, expressed as a difference $W' \gets W + \Delta(W)$. Neural functionals are trained to learn $\Delta(\cdot)$ that achieves some desired visual transformation, such as dilating the image. Permutation equivariance is a useful inductive bias here since if $\Delta(W)$ is the desired edit for $W$, then for any neuron permutation $\sigma$ the desired edit to $\sigma W$ is $\sigma \Delta(W)$.
In addition to the MNIST dilation and CIFAR contrast tasks from \citet{zhou2023permutation}, we also introduce several new editing tasks: MNIST erosion, MNIST gradient, and FashionMNIST gradient.
Gradient tasks roughly anount to edge detection; Figure~\ref{fig:stylize-samples} in the appendix shows sample inputs and targets for each editing task.

We compare NFT editing performance against the two NFN variants~\cite{zhou2023permutation}, with each method using the a similar number of parameters ($\sim 7$M). For full training details, see Appendix~\ref{appendix:editing}).
Table~\ref{tab:inr-stylization} shows the test mean square error (MSE) between the edited INR and the ground truth visual transformation. NFTs consistently outperform NFNs on most INR editing tasks. In addition, Table~\ref{tab:inr-stylization-full} in the appendix shows that NFTs generally obtain both lower training and test error, indicating that the attention-based architecture enables greater expressivity. Figure \ref{fig:stylize-samples} shows qualitative samples produced by each editing method on each task.

\subsection{Predicting CNN classifier generalization}
\begin{table}[]
    \begin{adjustbox}{width=0.85\textwidth}
    \centering
    \caption{(Small CNN Zoo~\citep{unterthiner2020predicting} benchmark.) Rank correlation $\tau$ for predicting the generalization of CNN image classifiers with unseen hyperparameters trained CIFAR-10-GS and SVHN-GS (GS=grayscale). The \nft outperforms \ours and hand-picked features (\statnet), while \fullours performs best overall. Uncertainties indicate standard error over two runs.}
    \label{tab:pred_gen_zoo}
    \begin{tabular}{crrrrr}
      \toprule
       & \fullours~\citep{zhou2023permutation} & \ours~\citep{zhou2023permutation} & \statnet~\citep{unterthiner2020predicting} & \nft (Ours) \\
      \midrule
      CIFAR-10-GS & $\mathbf{0.934 \pm 0.001}$ & $0.922 \pm 0.001$ & $0.915 \pm 0.002$ & $0.926 \pm 0.001$ \\
      SVHN-GS & $\mathbf{0.931 \pm 0.005}$  & $0.856 \pm 0.001$ & $0.843 \pm 0.000$ & $0.858 \pm 0.000$ \\
      \bottomrule
    \end{tabular}
    \end{adjustbox}
\end{table}

Whereas the previous experiments have focused on the weight spaces of INRs (which are implemented by small MLPs), we would also like to evaluate how NFTs process other weight spaces such as those belonging to convolutional neural network classifiers. Large-scale empirical studies have produced datasets of trained classifiers under different hyperparameter settings~\citep{unterthiner2020predicting,eilertsen2020classifying}, enabling a data-driven approach to modeling generalization from their weights, which could lead to new insights.
Prior methods for predicting classifier generalization typically rely on extracting hand-designed features from the weights before using them to predict the test accuracy~\citep{jiang2018predicting, yak2019towards, unterthiner2020predicting,jiang2021methods, martin2021implicit}.

We now study how NFTs can model generalization from raw weights using the Small CNN Zoo~\citep{unterthiner2020predicting}, which contains thousands of CNNs trained on image classification datasets. In addition to comparing against the two neural functional variants \ours{} and \fullours{}~\citep{zhou2023permutation}, we also show the performance of a hand-designed features approach called \statnet{}~\citep{unterthiner2020predicting}. Each generalization predictor is trained on thousands of CNN weights produced under varying initialization and optimization hyperparameters, and is then tested on held out weights produced using unseen hyperparameters. Appendix~\ref{appendix:generalization} describes the experimental setup in full detail.

Table~\ref{tab:pred_gen_zoo} shows the test performance of each method using the Kendall rank correlation coefficient $\tau$~\citep{kendall1938new}. Across both datasets, neural functionals operating on raw weights outperform the hand-designed features baseline \statnet{}. NFTs slightly outperform \ours{} on each dataset, while \fullours{} achieves the best performance overall. Note that both \ours{} and NFTs use layers that assume input and output neurons are permutable (NP). Although we use input/output position encoding to remove the stronger NP assumptions, the results suggest that HNP designs may be naturally better suited to this task.

\section{Related Work}
It is well known that the weight spaces of neural networks contain numerous symmetries, i.e., transformations that preserve the network's behavior~\citep{hecht1990algebraic}.
Permutation symmetries in particular have been studied in the context of neural network loss landscapes~\citep{garipov2018loss, brea2019weight,entezari2021role} and weight-space merging~\citep{tatro2020optimizing,ainsworth2022git}.
However, most prior methods for processing weight space objects do not explicitly account for these symmetries~\citep{andrychowicz2016learning, li2016learning, ha2016hypernetworks,krueger2017bayesian,zhang2018graph, deutsch2019generative, knyazev2021parameter}, although some have tried to encourage permutation equivariance through data augmentation~\citep{peebles2022learning, metz2022velo}. Another workaround approach is to explicitly restrict the weight space being considered by, for example, fixing the initialization of all networks being processed~\citep{de2023deep} or meta-learning modulations of a set of base parameters~\citep{dupont2022data,bauer2023spatial}. Instead, we study the problem of encoding permutation symmetries into the neural functional itself, without any restrictions on the weight space being processed.

There are a variety of methods that incorporate symmetries into deep neural network architectures~\citep{ravanbakhsh2017equivariance,kondor2018generalization,finzi2021practical}. Examples include (group) convolutions for images~\citep{lecun1995convolutional, cohen2016group} and permutation equivariant architectures for general set-structured inputs~\citep{qi2016pointnet,zaheer2017deep,hartford2018deep,thiede2020general,maron2020learning}. Most directly related to our work are that of \citet{navon2023equivariant} and \citet{zhou2023permutation}, who introduce linear layers equivariant to the neuron permutation symmetries of feedforward networks. We extend their approach by introducing nonlinear equivariant layers based on the attention mechanism, and use them to construct NFTs.

\section{Conclusion}
This work introduces neural functional transformers, a novel class of weight-space models designed with neuron permutation symmetries in mind. Our approach extends recent work on permutation-equivariant weight-space models using nonlinear self-attention and cross-attention layers, which can enable greater expressivity compared to existing linear layers.

We empirically evaluate the effectiveness of NFTs on weight-space tasks involving datasets of trained CNN classifiers and implicit neural representations (INRs). Operating on weights alone, NFTs can predict the test accuracy of CNN classifiers, modify the content of INRs, and classify INR signals. We also use NFTs to develop \encodername, a method for mapping INR weights to compact and $\iogroup$-invariant latent representations, which significantly improve performance in downstream tasks such as INR classification.

Some limitations of NFTs include increased computational costs from self-attention relative to linear weight-space layers, and the difficulty of training large NFT architectures stably.
Future work may address these limitations, and explore additional applications of NFTs such as for learned optimization or weight-space generative modeling. Overall, our work highlights the potential of attention-based weight-space layers offers a promising direction for the development of more expressive and powerful neural functional architectures.

\begin{ack}
We thank Adriano Cardace and Yoonho Lee for interesting discussions and suggestions related to the development of this paper. AZ and KB are supported by the NSF Graduate Research Fellowship Program. This project was supported by Juniper Networks and ONR grant N00014-22-1-2621.
\end{ack}

\bibliographystyle{abbrvnat}
\bibliography{references}

\clearpage
\appendix

\section{Weight-space self-attention}
\label{sec:sa-details}

\subsection{Proof of minimal equivariance}
\label{sec:minimal-proof}
We repeat the claim about minimal equivariance for weight-space self-attention and layer position encodings $\nfsa \circ \layerenc: \calW^c \rightarrow \calW^c$:
\minimal*
\begin{proof} We first show $\iogroup$-equivariance, before checking that it minimally equivariant.

\textbf{Equivariance}. It is straightforward to check that $\layerenc$ is equivariant to any permutation that preserves the layer index (which neuron permutations do). We can also check that $\nfsa$ is $\iogroup$-equivariant by checking $\nfsa(\sigma W)^{(i)}_{jk} = \nfsa(W)^{(i)}_{\sigma_i^{-1}(j),\sigma_{i-1}^{-1}(k)}$. To show this, we can expand the left hand side term-by-term using the definition of $\nfsa$.

\textbf{Minimal equivariance}. Here our goal is to show \textit{non}-equivariance to permutations $\tau \in \fullgroup$ but $\tau \notin \iogroup$ (false symmetries). That is, we should find some input $W$ such that $\nfsa(\layerenc(\tau W)) \neq \tau \nfsa (\layerenc(W))$, for some parameters $\set{\theta_Q, \theta_K, \theta_V}$ and $\set{\phi^{(i)}}$.

It will be useful to define the actions of permutations in $\fullgroup$ on the indices of the weights, rather than the weights themselves. The index-space for a weight space $\calW$ is defined as a set of 3-tuples representing layer number, row, and column, respectively:
\begin{equation}
    \II = \Set{(i,j,k) | i=\range{1..L}, j=\range{1..n_i}; k=\range{1..n_{i-1}}}.
\end{equation}
For $\alpha \in \II$, we use the subscripts $\ell, r, c$ to denote the layer, row, and column indices respectively. That is, $\alpha = (\alpha_\ell, \alpha_r, \alpha_c) = (i,j,k)$.

Notice that the set of arbitrary permutations $\fullgroup$ is simply the set of all bijections $\tau: \II \rightarrow \II$ (any index can be arbitrarily permuted to any other index), while $\iogroup$ is the subgroup of bijections $\sigma$ that can be written as:
\begin{equation}
    \sigma(i,j,k) = (i,\sigma_i(j), \sigma_{i-1}(k)), \quad \forall (i,j,k) \in \II.
\end{equation}

False symmetries can broadly be categorized into three groups. In fact, we can verify that any permutation $\tau$ that fails to satisfy one of these three cases must in $\iogroup$:
\begin{enumerate}[nosep]
\item \textbf{False symmetry that permutes weights across layers.} There exists an $(i,j,k) \in \II$ such that $\tau (i,j,k)_\ell \neq i$.
\item \textbf{False symmetry where row and column permutations are not independent.} Suppose a false symmetry $\tau$ does not fall under case (1), i.e., it preserves the layer index. Then it may still fail to have independent permutations of row and columns. In this case, either there is an $(i,j,k)$ and $q$ such that $\tau(i, j, k) = (i, j',k')$, but $\tau(i, j, q)_r \neq j'$. Or there is an $(i,j,k)$ and $p$ such that $\tau(i,j,k) = (i,j',k')$ but $\tau(i,p,k)_c \neq k'$.
\item \textbf{Adjacent permutations decoupled.} If a false symmetry $\tau$ is not one of the two cases above, there must be an $(i,j,k) \in \II$ and $q$ such that $\tau (i,j,k) = (i, j', k')$ but $\tau (i-1,k,q)_r \neq k'$. That is, an instance where one row pair of layer $i-1$ do not permute the same way as the columns of layer $i$.
\end{enumerate}

We can now check non-equivariance for $\tau$ in each case. To simplify the notation here we will assume $c=1$ although the general case is similar:
\begin{enumerate}[nosep]
\item If $\tau$ does not preserve layer index, $\layerenc$ will be non-equivariant since each layer has a different encoding.
\item Suppose there is an $(i,j,k)$ and $q$ such that $\tau(i, j, k) = (i, j',k')$, but $\tau(i, j, q)_r \neq j'$, and let $\theta_Q = \theta_K = \theta_V = I$. Now consider the input $W \in \calW$ such that all weights are $0$ except for $\wt{i}{jk} = \wt{i}{jq} = 1$. Then we can check that $\sqbra{\tau \nfsa (W)}_{i, j',k'} \neq \nfsa(\tau W)_{i, j', k'}$, so the layer is not equivariant to $\tau$. We can construct a similar argument for the other case, where there is an $(i,j,k)$ and $p$ such that $\tau(i,j,k) = (i,j',k')$ but $\tau(i,p,k)_c \neq k'$.
\item In this case there must be an $(i,j,k) \in \II$ and $q$ such that $\tau (i,j,k) = (i, j', k')$ but $\tau (i-1,k,q)_r \neq k'$. Again let $\theta_Q = \theta_K = \theta_V = I$ and consider $W \in \calW$ such that all weights are $0$ except for $\wt{i}{jk} = \wt{i-1}{kq} = 1$. Then, similar to the above case, we can verify that $\sqbra{\tau \nfsa (W)}_{i, j',k'} \neq \nfsa (\tau W)_{i, j',k'}$, so the layer is not equivariant to such $\tau$.
\end{enumerate}
Taken together, the three cases above show that although $\nfsa \circ \layerenc$ is $\iogroup$-equivariant, it is not equivariant to any other permutations $\tau \notin \iogroup$. Hence the layer is \textit{minimally} equivariant.
\end{proof}

\subsection{Full description including biases}
\label{sec:full-desc}
Let $\calB$ be the space of biases, and $\calU = \calW \times \calB$ the full space of both weights and biases. Then the full weight-space self-attention layer $\nfsa: \calU^c \rightarrow \calU^c$ is defined, for input $U=(W,b)$:
\begin{equation}
\nfsa(U; \theta_Q, \theta_K, \theta_V) = \paren{Y(U), z(U)},
\end{equation}
where $Y(U) \in \calW^c$ and $z(U) \in \calB ^ c$, with:
\begin{align}
    Y(W, b)^{(i)}_{jk} = &\quad\,\attn\paren{\idx{Q}{i}{j,:}, KV_1}_k + \attn\paren{\idx{Q}{i}{:,k}, KV_2}_j + \attn\paren{\idx{Q}{i}{jk}, KV_3}, \\
    z(W, b)^{(i)}_{j} = &\quad\,\attn\paren{\idx{q}{i}{}, KV_2}_j + \attn\paren{\idx{q}{i}{j}, KV_3}, \text{ where} \\
    KV_1 = &\Set{\idx{(K,V)}{i-1}{:,q}}_{q=1}^{n_{i-2}} \bigcup \Set{\idx{(K,V)}{i}{p,:}}_{p=1}^{n_i} \bigcup \Set{(k,v)^{(i-1)}} \\
    KV_2 = &\Set{\idx{(K,V)}{i}{:,q}}_{q=1}^{n_{i-1}} \bigcup \Set{\idx{(K,V)}{i+1}{p,:}}_{p=1}^{n_{i+1}} \bigcup \Set{(k,v)^{(i)}}\\
    KV_3 = &\Set{\idx{(K,V)}{s}{pq} | \forall s,p,q} \bigcup \Set{\idx{(k,v)}{s}{p} | \forall s, p}, \text{ and}\\
    \idx{Q}{i}{j,k} := \theta_Q &\wt{i}{jk}, \quad 
    \idx{K}{i}{j,k} := \theta_K \wt{i}{jk}, \quad
    \idx{V}{i}{j,k} := \theta_V \wt{i}{jk}, \\
    \idx{q}{i}{j} := \theta_Q &\bs{i}{j}, \quad 
    \idx{k}{i}{j} := \theta_K \bs{i}{j}, \quad
    \idx{v}{i}{j} := \theta_V \bs{i}{j}.
\end{align}
We use $KV_1, KV_2, KV_3$ to denote the three sets of key-value pairs involved in the layer computation and distinguish the linear projections of weights and biases using uppercase ($Q,K,V$) and lowercase ($q,k,v$), respectively.

\section{Experiment details}
Recall from Section ~\ref{subsec:block} that NFTs consist of ``blocks'' that include weight-space self attention, layer normalization, and feedforward MLPs. Additionally, the first layer is typically a projection from the input channel dimensionality to the hidden channel dimensionality, for which we use random Fourier features.

\begin{table}[htbp]
\centering
\begin{tabular}{|c|c|c|c|c|c|c|}
\hline
\textbf{Hyperparameters} & \textbf{Predicting gen.} & \textbf{Editing INRs} & \textbf{\encodername{}} \\ \hline
\textbf{\# blocks} & 4 & 4 & 6 \\
\textbf{\# channels} & 256 & 256 & 256\\
\textbf{MLP hidden dim} & 1024 & 512 & 1024 \\
\textbf{Fourier scale} & 3 & 3 & 3\\
\textbf{Fourier size} & 128 & 128 & 128 \\
\textbf{\# attn heads} & 4 & 8 & 4 \\
\textbf{dropout p} & 0.1 & 0.1 & 0 \\
\textbf{invariant layer} & HNP & -- & \nfca \\
\textbf{\nfca{} $M$} & -- & -- & 16 \\
\textbf{\nfca{} dim} & -- & -- & 256 \\
\textbf{Optimizer} & Adam & Adam & AdamW \\
\textbf{Learning rate} & 0.001 & 0.001 & 0.0001 \\
\textbf{Weight decay coeff} & 0 & 0 & 0.01 \\
\textbf{LR warmup steps} & 10K & 10K & 10K \\
\textbf{Total params} & 46M & 7M & 22M \\
\textbf{Training steps} & 75K & 50K & 200K \\
\textbf{Training time} & 5H & 2H & 13H \\
\hline
\end{tabular}
\end{table}
Here we summarize the hyperparameters (and other training information) for the NFTs in each experiment in this paper. NFTs are described by the following hyperparameters:
\begin{itemize}[nosep]
\item \# blocks: Number of equivariant weight-space blocks $\block(\cdot)$
\item \# channels: channel dimension $c$ used by the blocks.
\item MLP hidden dim: the hidden layer size of the feedforward MLPs within each block
\item Fourier scale: standard deviation of random vectors used to compute random Fourier features
\item Fourier size: number of random fourier features
\item \# attn heads: Number of attention heads used by weight-space self-attention.
\item dropout p: Dropout probability for attention matrix and in feedforward networks.
\item invariant layer (for $\iogroup$-invariant NFTs): Whether the invariant pooling is done with summation (HNP/NP~\citep{zhou2023permutation}) or cross-attention (ours).
\item CA $M$: If using cross attention, the number of learned query embeddings $e_1, \cdots, e_M$.
\item CA dim: Dimension of vectors going into cross attention.
\end{itemize}

\subsection{Generalization prediction}
\label{appendix:generalization}
Each Small CNN Zoo dataset that we consider (CIFAR-10-GS and SVHN-GS) contains the weights of $30000$ CNNs trained with varying hyperparameters. The weights are split into $15000$ for training and validation and the rest for testing. We use $20\%$ of the non-test data for validation, and train on the remaining $12000$.
We use the validation data to sweep over Fourier scales of $3$ or $10$ and try both HNP (summation) or CA (cross attention) layers to achieve invariance.

\subsection{Style editing}
\label{appendix:editing}

\begin{figure}
    \centering
    \includegraphics[width=0.9\textwidth]{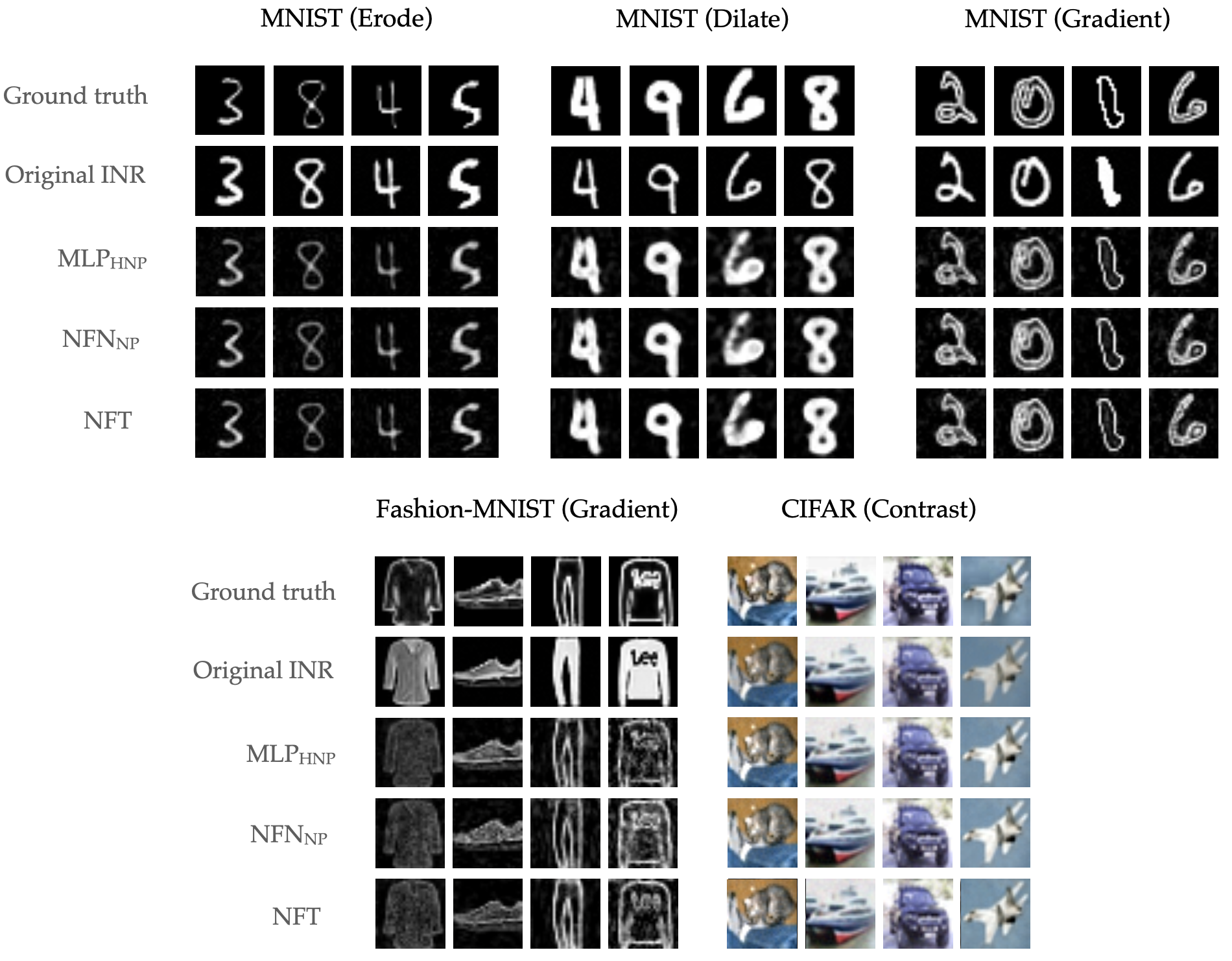}
    \caption{Samples of target image transformations, as well as original and edited INRs for each editing task in Table \ref{tab:inr-stylization}. Despite the fact that the NFT quantitatively outperforms the $\text{NFN}_\text{HNP}$ and $\text{NFN}_\text{NP}$, compelling qualitative differences in the edited images can be difficult to identify due to small image size.}
    \label{fig:stylize-samples}
\end{figure}

We implement four visual transformations: erode reduces the size of objects in an image by removing pixels on the boundary; dilate adds pixels to object boundaries and can be thought of as the opposite of erode; morphological gradient performs edge detection by computing the difference between dilation and erosion; and contrast magnifies color differences in an image, making objects more distinguishable. Figure~\ref{fig:stylize-samples} shows visual samples of the input and target for each task.

\begin{table}[htpb]
    \centering
    \caption{Mean squared error to ground truth visual transformation for five INR editing tasks. NFTs generally achieve lower both lower training and test error than NFN variants without using more parameters, suggesting greater expressivity from attention. Uncertainties indicate standard error over three seeds.}
    \begin{adjustbox}{width=\textwidth}
    \begin{tabular}{ccrrr}
      \toprule
       &  & \fullours~\citep{zhou2023permutation} & \ours~\citep{zhou2023permutation} & \nft (Ours)\\
      \midrule
      \multirow{2}{*}{\centering MNIST (erode)} & Train & $0.0217 \pm 0.0004$ & $0.0212 \pm 0.0002$ & $0.0195 \pm 0.0004$ \\
        & Test & $0.0225 \pm 0.0001$ & $0.0223 \pm 0.0000$ &$\mathbf{0.0194 \pm 0.0002}$ \\
        \midrule
        \multirow{2}{*}{\centering MNIST (dilate)} & Train & $0.0575 \pm 0.0010$ & $0.0671 \pm 0.0017$ & $0.0473 \pm 0.0006$ \\
        & Test & $0.0656 \pm 0.0000$ & $0.0693 \pm 0.0009$ & $\mathbf{0.0510 \pm 0.0004}$ \\
        \midrule
      \multirow{2}{*}{\centering MNIST (gradient)} & Train & $0.0536 \pm 0.0020$ & $0.0547 \pm 0.0009$ & $0.0474\pm0.0005$ \\
        & Test & $0.0552 \pm 0.0003$ & $0.0566\pm0.0000$ & $\mathbf{0.0484\pm0.0007}$ \\
        \midrule
      \multirow{2}{*}{\centering FashionMNIST (gradient)} & Train & $0.0851 \pm 0.0012$ & $0.0853 \pm 0.0008$ & $0.0795 \pm 0.0009$\\
        & Test & $0.0859 \pm 0.0001$ & $0.0870 \pm 0.0001$ & $\mathbf{0.0800 \pm 0.0002}$ \\
        \midrule
    \multirow{2}{*}{\centering CIFAR (contrast)} & Train & $0.0191 \pm 0.0002$ & $0.0185 \pm 0.0002$ & $0.0192 \pm 0.0005$\\
        & Test & $0.0204 \pm 0.0000$ & $0.0203 \pm 0.0000$ & $0.0200 \pm 0.0002$\\
        \midrule
      \bottomrule
    \end{tabular}
    \end{adjustbox}
    \label{tab:inr-stylization-full}
\end{table}

\subsection{\encodername{}}
\label{appendix:encoding}
\begin{table}[htpb]
  \begin{tabular}{lcc} \toprule
  & $\encodernfn$ & $\encodernft$ \\
  \midrule
  MNIST & $0.046 \pm 0.001$ & $\mathbf{0.027 \pm 0.003}$ \\
  FashionMNIST & $0.085 \pm 0.006$ & $\mathbf{0.070 \pm 0.006}$ \\
  CIFAR-10 & $0.117 \pm 0.008$ & $\mathbf{0.036 \pm 0.006}$ \\
  \bottomrule
  \end{tabular}
  \caption{Mean squared error (MSE) of \encodername{} reconstructions on test INR weights. Using an NFT encoder in \encodername{} achieves better reconstruction error, produces higher quality samples, and leads to better downstream performance.}
  \label{tab:inr2array-mse}
\end{table}
We split the \inr{}s of each dataset into training, validation, and testing sets, and train the encoder and decoder using only the training set. We use the validation error to sweep over the following hyperparameters (for both the NFT and NFN variants): \# blocks ($4$ vs $6$), \# channels ($256$ vs $512$), MLP hidden dim ($512$ vs $1024$), Fourier scale ($3$ vs $10$), \# attn heads ($4$ vs $8$), and dropout ($0.1$ vs $0$). After sweeping these hyperparameters, we use the best hyperparameter configuration to train the encoder and decoder and do early stopping with the validation error. The encoder portion of the best checkpoint can then be used to produce latent array representations of INR inputs.

\subsection{INR Classification}
\label{appendix:classification}
\begin{table}
    \centering
    \caption{Classification train and test accuracies (\%) on datasets of image INRs (MNIST, FashionMNIST, and CIFAR-10). \encodername{} outperforms prior state-of-the-art results in this setting (DWS and NFN). For the NFN baseline~\citep{zhou2023permutation} we report the higher performance out of their NP and HNP variants. Uncertainties indicate standard error over three runs.}
    \begin{adjustbox}{width=\textwidth}
    \begin{tabular}{lcccccc}
      \toprule
      & \multicolumn{2}{c}{MNIST} & \multicolumn{2}{c}{FashionMNIST} & \multicolumn{2}{c}{CIFAR-10} \\
      \midrule
      & Train & Test & Train & Test & Train & Test \\
      DWS~\citep{navon2023equivariant} & -- & $85.7 \pm 0.57$ & -- & $65.5 \pm 0.48$ & -- & -- \\
      NFN~\citep{zhou2023permutation} & $96.2\pm0.24$ & $92.9 \pm 0.38$ & $84.1\pm 1.02$ & $75.6 \pm 1.07$ & $61.0\pm6.60$ & $46.6 \pm 0.13$ \\
      Spatial NFN (Ours) & $95.3\pm 0.59$ & $92.9\pm 0.46$ & $76.4\pm0.87$ & $70.8\pm0.53$ & $52.3\pm 0.30$ & $45.6 \pm 0.11$ \\
      $\encodernfn$ (Ours) & $100 \pm 0.00$ & $94.6 \pm 0.00$ & $92.8 \pm 0.00$ & $76.7 \pm 0.00$ & $98.5 \pm 0.00$ & $45.4 \pm 0.00$ \\
      $\encodernft$ (Ours) & $100 \pm 0.00$ & $\mathbf{98.5 \pm 0.00}$ & $97.6 \pm 0.00$ & $\mathbf{79.3 \pm 0.00}$ & $100 \pm 0.00$ & $\mathbf{63.4 \pm 0.00}$ \\
      \bottomrule
    \end{tabular}
    \end{adjustbox}
    \label{tab:full-inr-classification}
\end{table}
After training \encodername{} on a given dataset, we select the checkpoint with the smallest reconstruction error on the validation set, and use the encoder to produce a $16 \times 256$ latent array representation for any input \inr{}. We view the latent as a length-$16$ sequence of $256$-dim vectors, and project each vector into $\R^{512}$ before feeding the array into Transformer encoder with $12$ blocks. This produces an output $z^o \in \R^{16\times 512}$; we take the first output $z^o_0 \in \R^{512}$ and feed it into a 2-layer MLP with $512$ hidden units that produces the classification logits. We train with cross-entropy loss for $100$ epochs, using the AdamW optimizer and mixup augmentation. Using the validation accuracy, we sweep over weight decay ($0.1$ vs $0.01$) and mixup alpha ($0$ vs $0.2$ vs $1$) and perform early stopping.
\end{document}